\newtheorem{definition}{Definition}
\newtheorem{theorem}{Theorem}
\newtheorem{corollary}[theorem]{Corollary}
\newtheorem{hypothesis}[theorem]{Hypothesis}
\newtheorem{proof}{Proof}
\newcommand{\reffig}[1]{Figure~\ref{#1}}
\newcommand{\reftable}[1]{Table~\ref{#1}}
\newcommand{\mdot}[2]{\left\langle #1, #2 \right\rangle}
\newcommand{\norm}[1]{\left\Vert #1 \right\Vert}
\newcommand{\sqnorm}[1]{\norm{#1}^2}
\newcommand{\reals}{\mathbb{R}}
\newcommand{\nsphere}[1]{\mathcal{S}_{#1}}
\newcommand{\blank}{\mathord{{}\_{}}}
\newcommand{\reftheorem}[1]{Theorem~\ref{#1}}
\newcommand{\refcorollary}[1]{Corollary~\ref{#1}}
\newcommand{\veccat}[2]{\left[ #1, #2\right]}
\newcommand{\simplified}[1]{\widetilde{#1}}
\title{Explicit Formulae to Interchangeably use Hyperplanes\newline
and Hyperballs using Inversive Geometry}
\author{\name Erik Thordsen \email erik.thordsen@tu-dortmund.de \\
      \addr Fakultät für Informatik, Data Mining\\
      TU Dortmund University
      \AND
      \name Erich Schubert \email erich.schubert@tu-dortmund.de \\
      \addr Fakultät für Informatik, Data Mining\\
      TU Dortmund University
}
\begin{document}

\maketitle

\begin{abstract}
	Many algorithms require discriminative boundaries, such as separating hyperplanes or hyperballs, or are specifically designed to work on spherical data.
	By applying inversive geometry, we show that the two discriminative boundaries can be used interchangeably, and that general Euclidean data can be transformed into spherical data, whenever a change in point distances is acceptable.
	We provide explicit formulae to embed general Euclidean data into spherical data and to unembed it back.
	We further show a duality between hyperspherical caps, i.e., the volume created by a separating hyperplane on spherical data, and hyperballs and provide explicit formulae to map between the two.
	We further provide equations to translate inner products and Euclidean distances between the two spaces, to avoid explicit embedding and unembedding.
	We also provide a method to enforce projections of the general Euclidean space onto hemi-hyperspheres and propose an intrinsic dimensionality based method to obtain \enquote{all-purpose} parameters.
	To show the usefulness of the cap-ball-duality, we discuss example applications in machine learning and vector similarity search.
\end{abstract}

\section{Introduction}
Some practical applications in machine learning are specifically designed to work on spherical or non-spherical data, such as HIOB by \cite{DBLP:conf/sisap/ThordsenS23}.
Extending these algorithms to non-spherical data can require a lot of effort.
In other cases, algorithms are specifically designed to work with separating hyperplanes, when one would perhaps desire to use hyperballs instead, such as SVMs which have been extended to use balls for the Support Vector Data Description by \cite{DBLP:journals/ml/TaxD04}.
Again, redesigning the algorithms can be quite difficult if not nigh impossible.
In this paper, we will provide concise formulae to eliminate both of these issues -- at least if a locally-linear change in point distances is acceptable.
We propose the use of inversive geometry to transfer general Euclidean datasets into spherical data and provide the necessary formulae to jump between hyperballs in the general Euclidean and hyperplanes in the spherical space.
Since our motivation is primarily to make Euclidean spaces spherical (in order to be able to use machine learning algorithms designed for spherical data), we will call the general Euclidean space the \enquote{original space} and the spherical space the \enquote{embedding space}.
While most of the proposed ideas have been well known for low-dimensional spaces in the field of inversive geometry for a long time, dating back to the early 19th century \citep[p.{} 279]{Coolidge1947AHO}, to our knowledge, no one has yet provided explicit formulae for the embedding and unembedding functions for arbitrary-dimensional spaces.
Neither did we find the necessary formulae to map hyperballs to hyperplanes and vice versa and to translate inner products and Euclidean distances between the two spaces.
Most literature on inversive geometry is limited to three dimensions, due to its usefulness in visualization techniques such as geographic maps. While it is often easy to show that some surface type is mapped to another surface type, giving the exact values for ball centers and radii requires quite exhausting algebraic transformations.
Today's machine learning methods, however, often involve high-dimensional embeddings, and much of the computational improvements are focused on optimizing the computation of inner products.
This paper provides equations useful to the practically inclined machine learning researcher, who can now with ease explore the possibilities of using spherical data in their algorithms.

We provide formulae to \enquote{move} between general Euclidean and spherical spaces of one additional dimension using arbitarily-dimensional stereographic projections.
The (inverse) stereographic projection used for that process is visualized in \reffig{fig:stereographic_demo}, where the poles are plotted as arrows.
We show that hyperspherical caps given specific poles get mapped to hyperballs in the general Euclidean space, for which we provide explicit formulae for center and radius.
We also provide the inverse map and equations to translate inner products and Euclidean distances between the two spaces.
We further show, that the result can be extended to arbitrary poles, resulting in a map between hyperspherical caps and a specific subclass of hyperellipsoids.
We provide parameters to enforce projections of the general Euclidean space onto hemi-hyperspheres and propose an intrinsic-dimensionality-based method to obtain \enquote{all-purpose} parameters.
To showcase the usefulness of our results, we include examples of how to use the provided methods in practical settings.

\begin{figure}
	\centering
	\begin{tikzpicture}[inner sep=0pt]
		\node[anchor=west,inner sep=0pt] at (0,0) {\includegraphics[scale=.75]{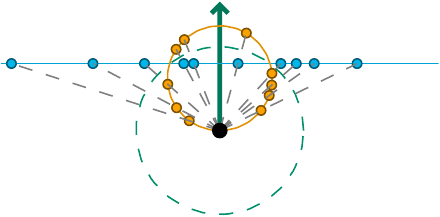}};
		\node[anchor=west,inner sep=0pt] at (7,0) {\includegraphics[scale=.5,trim=70 0 70 0,clip]{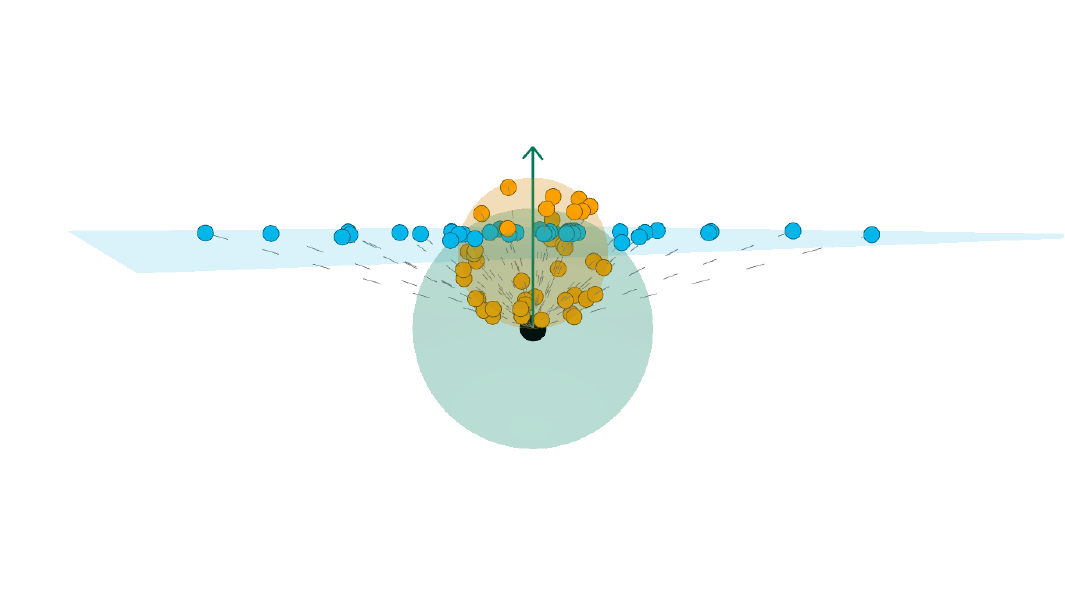}};
	\end{tikzpicture}
	\caption{
		Visualization of the (inverse) stereographic projection from one- and two-dimensional data to two- and three-dimensional spheres.
		By offsetting the data to a plane in an additional dimension (blue) and inverting the vector norms, i.e., mirroring each point with the unit sphere (green) around the origin (black), we obtain a spherical distribution (orange).
	}
	\label{fig:stereographic_demo}
\end{figure}

\section{Hyperspherical Embeddings}

Assuming an arbitrarily indexed dataset $X \subset \reals^d$, a unit-length inversion direction vector $v \in \reals^{d+1}$, and a scale factor $s \in \reals_{>0}$.
We define the affine hyperplane embedding of $X$ under $v$ and $s$ as

$$
H_A(X, v, s) \mapsto \veccat{X}{\frac{s-X^Tv_{1,\ldots,d}}{v_{d+1}}}
$$

where $\veccat{X}{y}$ is the set of vectors $x_i \in X$ extended by one dimension with coefficients $y_i \in \reals$.
Above and in the rest of this text, formulae that involve sets of vectors are per-vector operations.
One can easily see, that for any vector $x'$ in $H_A(X,v,s)$ holds $\mdot{x'}{v} = s$, i.e., that all points in $H_A(X,v,s)$ lie on an at least one-codimensional hyperplane.
$v$ is a normal vector of the hyperplane and whenever $s \neq 0$, the hyperplane is affine.
Using the affine hyperplane embedding, we can define the affine inversion sphere embedding $S_A(X, v, s) \subset \reals^{d+1}$~as

$$
S_A(X, v, s) \mapsto \frac{H_A(X,v,s)\hphantom{^2}}{\sqnorm{H_A(X,v,s)}}
$$

As per inversion theory, all points in $S_A(X, v, s)$ lie on a hypersphere that includes the origin in $\reals^{d+1}$ and whose center lies in direction $v$.
The furthest point to the origin in direction $v$ is precisely the inverted image of $sv$, i.e., $\frac{v}{s}$.
The center, thereby, is given as $\frac{v}{2s}$ and the radius is $\frac{1}{2s}$.
The definition of the affine inversion sphere embedding is analogous to the inverse stereographic projection in higher dimensions where $\frac{v}{s}$ denotes the north pole.
Accordingly, we can define the inversion-based spherical embedding $S(X, v, s)$ as

$$
S(X, v, s) \mapsto 2s\left(S_A(X, v, s) - \frac{v}{2s}\right) = 2s S_A(X, v, s) - v
$$

All vectors in $S(X, v, s)$ then must lie on a unit sphere around the origin in~$\reals^{d+1}$.
The choice of $v$ and $s$ both affect how the data is distributed on the resulting unit sphere.
Very small values in $s$ move the affine hyperplane closer to the origin, resulting in points clustering around $-v$, whereas very large values result in points clustering around $v$.
A more even spread of the data over the entire sphere is desirable in most cases.
One can find such an $s$ by grid search over a reasonable interval of $s$ values and investigate, e.g., the Angle-Based Intrinsic Dimensionality \citep[ABID,][]{Thordsen2022} of the resulting spherical distribution.
In case $X$ is very large, this can be estimated from a sample of $X$.
Mean-centering $X$ in advance and scaling $X$ to a mean vector norm of 1 typically results in a robust choice of $s$ within $[10^{-3},10^3]$ in which an exponentially spaced grid with about 50 steps should find a suitable value.
We can give the inverse of $S(X,v,s)$ as
$$
S^{-1}(Y,v,s) = 2s \left(\frac{Y + v\hphantom{^2}}{\sqnorm{Y + v}}\right)_{1,\ldots,d}
$$
where $(.)_{1,\ldots,d}$ denotes dropping the extra dimension $d+1$, such that $S^{-1}(S(X,v,s),v,s) = X$.

\section{Embedded Hyperspherical Caps and Hyperballs}

In this section, we first focus on the simpler case where $v=(0, \ldots, 0, 1)$, which clearly is a useful choice and preferred for applications where no other constraints have to be satisfied.
For that $v$, the hyperspherical cap resulting from intersecting a hyperplane with $S(X, v, s)$ corresponds to a hyperball in the original space of $X$ whenever $-v$ is not included in the cap.
It also simplifies the (un-)embedding functions to
\begin{eqnarray*}
	\simplified{S}(X,v,s) = 2s \frac{\veccat{X}{s}}{\sqnorm{X} + s^2} - v
	&\text{and}&
	\simplified{S}^{-1}(Y,v,s) = s \frac{Y_{1,\ldots,d}}{1 + Y_{d+1}}
	\,.
\end{eqnarray*}
To prove the relation between hyperspherical caps and hyperballs, we first introduce the necessary notation.

\begin{definition}[Hypherspherical Caps]
	The \emph{open hyperspherical cap}~$C$ of a directional unit-length vector $p \in \reals^d$ and a bias $b \in (0,1)$ is the set of points on the unit sphere, whose dot product with~$p$ is greater than~$b$, 
	$$
	C(p,b) := \left\{ x \in \nsphere{d-1} \mid \mdot{x}{p} > b \right\}
	\,.
	$$
	The separating hyperplane has normal vector $p$ and bias $b$.
	The \emph{closed hyperspherical cap} is defined analogously with $\mdot{x}{p} \geq b$.
	The \emph{boundary of the cap} is the difference between the closed and the open cap.
\end{definition}

\begin{definition}[Hyperball]
	The \emph{open hyperball}~$B$ of a center $c \in \reals^d$ and a radius $r \in \reals_{>0}$ is the set of points whose Euclidean distance to~$c$ is less than~$r$, i.e.,
	$$
	B(c,r) := \left\{ x \in \reals^d \mid \norm{x-c} < r \right\}
	\,.
	$$
	The \emph{closed hyperball} is defined analogously with $\norm{x-c} \leq r$.
	The \emph{boundary of the hyperball} is the difference between the closed and the open cap.
\end{definition}

We can now prove the duality of hyperspherical caps in the spherical embedding space and hyperballs in the original space.
It is, again, important that $-v$ is not included in the cap, i.e., that the directional vector~$p$ and bias~$b$ of the cap satisfy $b + p_{d+1} > 0$ and consequentially $\mdot{v}{p} < b$, since $\simplified{S}^{-1}(-v,v,s)$ is ill-defined and corresponds to inifinity in all possible directions in the original space.
From inversive geometry, it is well known, that hyperspheres that do not contain the center of inversion get mapped to hyperspheres, e.g., Theorem~3.4 of \cite{Lee2020GeometryFI}.
Since we shift the center of the hyperball by $v$, the center of inversion ends up at $-v$ in the projected space.
The boundaries of the cap is a hypersphere, thus when the cap does not intersect $-v$, the boundary in the original space is also a hypersphere.
When $-v$ is contained in the cap, the \enquote{interior} of the cap gets projected to the \enquote{outerior} of the hypersphere in original space, i.e. the complement of the contained hyperball.
Conversely, if $-v$ is neither on the boundary nor inside the cap, the cap gets projected to a hyperball in the original space.
Since all points in $H_A(X,v,s)$ share the same $(d+1)$-th coefficient, cutting that dimension does not affect the resulting hyperball.
While proving the existence of the resulting hyperball is rather trivial, finding the explicit values for center and radius is not.
The proof for the Cap-Ball-Duality is oriented along the boundary of the cap and ball, i.e., the hypersphere of intersection of the unit $d$-sphere $\nsphere{d}$ and the hyperplane defined by $p$ and $b$ and the hypersphere around $c$ with radius~$r$ in the original space.
The formulae for the center and the radius are not derived directly, but are based on the intuition that the image under $S$ must be shifted towards $v$ to obtain the cap direction vector.
Using that intuition, exact values can be derived, resulting in a closed form existence proof.

\begin{theorem}[Cap-Ball-Duality]
	For arbitrary $p \in \nsphere{d}$ and $b \in (-1,1)$ with $b + p_{d+1} > 0$, the image under $S^{-1}(\blank, v, s)$ of the hyperspherical cap $C(p,b)$ is a hyperball in $\reals^d$ whenever $v = (0,\ldots,0,1) \in \reals^{d+1}$ and $s \in \reals_{>0}$.
	The center and radius of the hyperball are
	$$
	c = S^{-1}\left(\frac{p - \alpha v}{\norm{p - \alpha v}}, v, s\right)
	,~\text{and}~
	r = s\sqrt{\frac{2\alpha}{b+p_{d+1}}}
	,~\text{where}~
	\alpha := \frac{1-b^2}{2(b+p_{d+1})}
	\,.
	$$
	\label{theorem:cap_ball_duality}
\end{theorem}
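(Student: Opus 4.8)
The plan is to work entirely with the boundaries. The cap $C(p,b)$ is bounded by the hypersphere $\{y \in \nsphere{d} \mid \mdot{y}{p} = b\}$, and since $\simplified{S}^{-1}(\blank, v, s)$ is a continuous bijection between $\nsphere{d}\setminus\{-v\}$ and $\reals^d$, it suffices to show that the image of this boundary hypersphere is a Euclidean sphere $\{x \mid \norm{x - c} = r\}$ and then to note which side of it is the image of the cap's interior. The hypothesis $b + p_{d+1} > 0$ guarantees $-v \notin C(p,b)$, so the image is the bounded hyperball rather than its complement, as discussed before the statement. Rather than invoking the abstract existence result from inversive geometry, I would obtain existence and the explicit values simultaneously from a single algebraic computation.

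First I would substitute the simplified embedding $y = \simplified{S}(x, v, s) = 2s\frac{\veccat{x}{s}}{\sqnorm{x} + s^2} - v$ into the boundary condition $\mdot{y}{p} = b$. Rewriting it as $\mdot{y + v}{p} = b + p_{d+1}$ (using $\mdot{v}{p} = p_{d+1}$) removes the $-v$ term cleanly and, after expanding $\mdot{\veccat{x}{s}}{p} = \mdot{x}{p_{1,\ldots,d}} + s\,p_{d+1}$, yields
$$
	2s\left(\mdot{x}{p_{1,\ldots,d}} + s\,p_{d+1}\right) = (b + p_{d+1})\left(\sqnorm{x} + s^2\right)\,.
$$
Collecting terms gives a quadratic in $x$ whose $\sqnorm{x}$-coefficient is exactly $\beta := b + p_{d+1}$. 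Here the hypothesis $\beta > 0$ is essential: it is what makes the locus a genuine sphere, whereas $\beta = 0$ would degenerate it to a hyperplane, corresponding to the excluded case where the cap touches $-v$. Dividing by $\beta$ and completing the square would then give center $c = \frac{s}{\beta}p_{1,\ldots,d}$, and substituting $\sqnorm{p_{1,\ldots,d}} = 1 - p_{d+1}^2$ would collapse the constant into $r^2 = \frac{s^2(1 - b^2)}{\beta^2}$.

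The last step is to reconcile these directly computed values with the stated closed forms. The radius is immediate, since $\frac{2\alpha}{b + p_{d+1}} = \frac{1 - b^2}{(b + p_{d+1})^2}$ by the definition of $\alpha$, so $r = s\sqrt{2\alpha/\beta}$ agrees. For the center I would expand the stated expression with $\simplified{S}^{-1}$, using that $v = (0,\ldots,0,1)$ kills the first $d$ coordinates of $\alpha v$, obtaining
$$
	S^{-1}\left(\frac{p - \alpha v}{\norm{p - \alpha v}}, v, s\right) = s\,\frac{p_{1,\ldots,d}}{\norm{p - \alpha v} + p_{d+1} - \alpha}\,,
$$
which equals $\frac{s}{\beta}p_{1,\ldots,d}$ precisely when $\norm{p - \alpha v} = b + \alpha$.

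I expect this last identity to be the main obstacle, since it is where the otherwise mysterious definition of $\alpha$ earns its keep. Squaring, the requirement $\norm{p - \alpha v}^2 = 1 - 2\alpha p_{d+1} + \alpha^2$ equals $(b + \alpha)^2 = b^2 + 2\alpha b + \alpha^2$ reduces exactly to $1 - b^2 = 2\alpha(b + p_{d+1}) = 2\alpha\beta$, i.e. to $\alpha = \frac{1 - b^2}{2(b + p_{d+1})}$. I would also need to justify the positive square root: writing $b + \alpha = \frac{b^2 + 2b\,p_{d+1} + 1}{2\beta}$, the numerator is a quadratic in $b$ with discriminant $4p_{d+1}^2 - 4 \leq 0$, hence nonnegative, and strictly positive on the open ranges of $b$ and $p_{d+1}$; together with $\beta > 0$ this gives $b + \alpha > 0$ and confirms the sign. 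This closes the chain and yields the claimed closed-form existence proof.
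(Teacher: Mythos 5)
Your proposal is correct, and it takes a genuinely different route from the paper's proof. The paper \emph{verifies} the claimed formulae: it fixes an arbitrary boundary point $x$ with $\mdot{x}{p}=b$, forms $\norm{\simplified{S}^{-1}(x)-\simplified{S}^{-1}(\hat{c})}$ with $\hat{c}=\frac{p-\alpha v}{\norm{p-\alpha v}}$, and grinds this down through a long chain of radical simplifications until it equals $s\sqrt{(1-b^2)/(b+p_{d+1})^2}=r$; the center and radius must be known (or guessed) in advance, and the interior/exterior correspondence is handled by a continuity remark. You instead \emph{derive} the ball: substituting $y=\simplified{S}(x,v,s)$ into $\mdot{y+v}{p}=b+p_{d+1}$ yields a quadratic in $x$ whose $\sqnorm{x}$-coefficient is $b+p_{d+1}>0$, and completing the square produces $c=\frac{s}{b+p_{d+1}}p_{1,\ldots,d}$ and $r^2=\frac{s^2(1-b^2)}{(b+p_{d+1})^2}$ with no prior knowledge of the answer; I checked the algebra (in particular $1-p_{d+1}^2+2p_{d+1}\beta-\beta^2=1-b^2$ for $\beta=b+p_{d+1}$) and it is right. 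Your approach buys three things: the degenerate case $b+p_{d+1}=0$ is visibly the hyperplane case, the open cap maps to the open ball because the defining inequality propagates through the same manipulation (no separate continuity argument needed), and the only nontrivial reconciliation with the stated closed form is the single identity $\norm{p-\alpha v}=b+\alpha$, which you correctly reduce to the definition of $\alpha$ and whose sign you justify via the nonpositive discriminant of $b^2+2bp_{d+1}+1$. One small loose end: you claim strict positivity of that numerator ``on the open ranges of $b$ and $p_{d+1}$,'' but $p_{d+1}$ ranges over the closed interval $[-1,1]$; the boundary cases are nonetheless excluded because equality would force $b=-p_{d+1}$, contradicting $b+p_{d+1}>0$, so the conclusion stands. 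Your derivation essentially recovers, as a by-product, the ``simplified'' center formula the paper states separately in Equation~(\ref{eq:ball_center_simplified}).
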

\begin{proof}
	The proof is based on showing that points on the boundary of the cap are projected to points on the boundary of the ball.
	For that, we can choose an arbitrary point on the boundary, start with the distance of its projection under $S^{-1}$ to the center of the ball, and show that it equals the radius of the ball specified in the Theorem.
	By continuity, all points inside/outside the cap get projected to points inside/outside the ball.
	The full proof can be found in the Appendix, since the derivation is quite technical and lengthy.
\end{proof}

With the simplifying choice of $v=(0,\ldots,0,1)$, we can give an alternative representation of $c$, that does not include $S$ or $S^{-1}$ as
\begin{eqnarray}
	c &=& s\,\frac{p_{1,\ldots,d}}{\sqrt{1-p_{d+1}^2 + (p_{d+1}-\alpha)^2} + p_{d+1} - \alpha}.
	\label{eq:ball_center_simplified}
\end{eqnarray}
This \enquote{simplified} equation highlights that the center $c$ of the hyperball is collinear to the first $d$ coefficients of the normal vector $p$ of the cap.
With that observation, we can choose $\beta$ such that $\beta c = p_{d+1}$ and derive the opposite direction to find a hyperspherical cap in the embedding space given a hyperball.
\begin{theorem}[Cap-Ball-Duality 2]
	For arbitrary center~$c \in \reals^d$ and radius~$r \in \reals_{>0}$, the image under $S(\blank, v, s)$ of the hyperball $B(c,r)$ is a hyperspherical cap on $\nsphere{d}$ whenever $v = (0,\ldots,0,1) \in \reals^{d+1}$ and $s \in \reals_{>0}$.
	The direction~$p$ and bias~$b$ of the hyperspherical cap are
	$$
	p = \left(\beta c, \sqrt{1-\beta^2\sqnorm{c}}\right)
	\,, \text{ and }
	b = \frac{s \sqrt{s^2 + \beta^2\sqnorm{c}r^2} - r^2\sqrt{1-\beta^2\sqnorm{c}}}{r^2+s^2}
	\,,
	$$
	$$
	\text{where }
	\beta := \frac{2s}{\sqrt{(\sqnorm{c}+r^2+s^2)^2-4\sqnorm{c}r^2}}
	\,.
	$$
	\label{theorem:cap_ball_duality2}
\end{theorem}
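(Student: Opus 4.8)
The plan is to invert the correspondence established in \reftheorem{theorem:cap_ball_duality}. Since $S(\blank,v,s)$ is a bijection from $\reals^d$ onto $\nsphere{d}\setminus\{-v\}$ and that theorem shows every admissible cap is carried to a ball, the image of a ball is necessarily a cap; it remains only to pin down its parameters $p$ and $b$ explicitly. The crucial structural input is the simplified center formula \eqref{eq:ball_center_simplified}, which shows that $c$ is collinear with $p_{1,\ldots,d}$, with a strictly positive proportionality factor (the prefactor $s/(\sqrt{\cdots}+p_{d+1}-\alpha)$ is positive). I would therefore write $p_{1,\ldots,d} = \beta c$ for an unknown scalar $\beta > 0$, so that the unit-length constraint on $p$ forces $p_{d+1} = \sqrt{1 - \beta^2\sqnorm{c}}$ (the positive root, since the admissibility condition $b + p_{d+1} > 0$ together with $b \in (-1,1)$ selects $p_{d+1} > 0$ for the cap that maps to a bounded ball). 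This reduces the problem to determining the two scalars $\beta$ and $b$.

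Next I would set up the defining equations by substituting $p_{1,\ldots,d} = \beta c$ into \eqref{eq:ball_center_simplified} and into the radius formula $r = s\sqrt{2\alpha/(b+p_{d+1})}$ from \reftheorem{theorem:cap_ball_duality}, together with the definition $\alpha = (1-b^2)/(2(b+p_{d+1}))$. Taking norms in \eqref{eq:ball_center_simplified} turns the vector identity into a single scalar equation relating $\beta$, $p_{d+1}$, and $\alpha$; after squaring to clear the root and using $p_{d+1}^2 = 1 - \beta^2\sqnorm{c}$, this collapses (dividing through by $\beta$) to a linear relation of the shape $2s\alpha = \beta(\sqnorm{c}-s^2) + 2s\,p_{d+1}$. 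The radius equation and the definition of $\alpha$ give two further relations; eliminating $\alpha$ among them yields $r^2(b+p_{d+1})^2 = s^2(1-b^2)$.

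The remaining work is to solve this coupled system for $\beta$ and $b$. I would eliminate $\alpha$ first, then use $p_{d+1} = \sqrt{1-\beta^2\sqnorm{c}}$ to reduce everything to equations in $\beta$ and $b$ alone, and finally isolate $\beta$. The expected outcome is the stated discriminant $(\sqnorm{c}+r^2+s^2)^2 - 4\sqnorm{c}r^2$; recognizing that it factors as $((\norm{c}-r)^2+s^2)((\norm{c}+r)^2+s^2)$ (a product of the squared near- and far-distances from the origin to the boundary sphere of $B(c,r)$, each padded by $s^2$) is the insight that keeps the algebra tractable and produces $\beta = 2s/\sqrt{(\sqnorm{c}+r^2+s^2)^2-4\sqnorm{c}r^2}$. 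Back-substitution into the eliminated relations then delivers $b$.

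I expect the main obstacle to be precisely this algebraic elimination: the system mixes square roots (from $p_{d+1}$ and from \eqref{eq:ball_center_simplified}) with quadratics in $b$, so careful sign and root selection is needed to stay on the correct branch ($\beta>0$, $p_{d+1}>0$, and $b+p_{d+1}>0$). A cleaner alternative I would keep in reserve is verification rather than derivation: substitute the claimed $p$ and $b$ directly into the center and radius formulae of \reftheorem{theorem:cap_ball_duality} and check that they reproduce the given $c$ and $r$. This sidesteps solving the system but still requires the same factorization of the discriminant to simplify the resulting expressions.
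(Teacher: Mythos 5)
Your proposal is correct and follows essentially the same route as the paper: both invert the equations of \reftheorem{theorem:cap_ball_duality} using the collinearity ansatz $p_{1,\ldots,d}=\beta c$ from Equation (\ref{eq:ball_center_simplified}) together with the unit-length constraint $p_{d+1}=\sqrt{1-\beta^2\sqnorm{c}}$, then eliminate $\alpha$ and $b$ to solve for $\beta$. The paper merely orders the elimination slightly differently (solving for $b$, then $\alpha$, then $p_{1,\ldots,d}$, then $p_{d+1}$ before introducing $\beta$), so the difference is cosmetic rather than substantive.
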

\begin{proof}
	The proof follows from inverting the equations in \reftheorem{theorem:cap_ball_duality} and inserting the chosen definitions here to derive the proposed value of $\beta$.
	The full proof can again be found in the Appendix.
\end{proof}

Using the two theorems, we can derive some auxiliary measures, that in applications will likely be of use.
First, we can show that for $s \geq \max_{x\in X} \norm{x}$ all points will be projected to the (closed) hemi-hypersphere in direction $v$.

\begin{corollary}[Hemispherical Embedding]
	Given indexed data~$x_i \in X \subset \reals^d$ and $v=(0,\ldots,0,1) \in \reals^{d+1}$.
	For any large enough scale $s \geq \max_{x \in X} \Vert x \Vert$, $S(X,v,s)$ lies on the closed hemisphere in direction $v$, i.e., $\forall \hat{x} \in S(X,v,s)\colon \mdot{\hat{x}}{v} \geq 0$.
	For any $s > \max_{x \in X} \Vert x \Vert$, all embedded vectors lie on the open hemisphere.
\end{corollary}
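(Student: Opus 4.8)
The plan is to reduce the dot-product condition $\mdot{\hat{x}}{v} \geq 0$ to an elementary inequality on $\norm{x}$ by exploiting the special form of $v$. Since $v = (0,\ldots,0,1)$, the inner product $\mdot{\hat{x}}{v}$ simply extracts the $(d+1)$-th coordinate of the embedded vector $\hat{x} = \simplified{S}(x, v, s)$. First I would invoke the simplified embedding $\simplified{S}(X,v,s) = 2s\frac{\veccat{X}{s}}{\sqnorm{X} + s^2} - v$, which is valid precisely because $v = (0,\ldots,0,1)$, and read off its last coordinate on a per-vector basis.

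The last entry of $\veccat{x}{s}$ is $s$, so the last coordinate of $2s\frac{\veccat{x}{s}}{\sqnorm{x} + s^2}$ is $\frac{2s^2}{\sqnorm{x} + s^2}$; subtracting the last coordinate of $v$ (which is $1$) yields $\mdot{\hat{x}}{v} = \frac{2s^2}{\sqnorm{x}+s^2} - 1 = \frac{s^2 - \sqnorm{x}}{\sqnorm{x} + s^2}$. The denominator is strictly positive for every $s > 0$, so the sign of $\mdot{\hat{x}}{v}$ equals the sign of $s^2 - \sqnorm{x}$.

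The conclusion then follows by a case split. For the closed-hemisphere claim, $\mdot{\hat{x}}{v} \geq 0$ holds if and only if $s^2 \geq \sqnorm{x}$, equivalently $s \geq \norm{x}$ since both quantities are nonnegative; requiring this for every $x \in X$ gives exactly $s \geq \max_{x\in X}\norm{x}$. For the open-hemisphere claim, replacing each inequality by its strict counterpart yields the condition $s > \max_{x\in X}\norm{x}$, and strictness of the denominator again prevents any degenerate edge case.

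There is no substantial obstacle here: once the special structure of $v$ is used to collapse the inner product to a single coordinate, the statement is an immediate consequence of the positivity of the denominator. The only points requiring a little care are confirming that the simplified form $\simplified{S}$ applies (it does, by the choice of $v$) and that passing from $s^2 \geq \sqnorm{x}$ to $s \geq \norm{x}$ is legitimate, which holds because $s > 0$ and $\norm{x} \geq 0$.
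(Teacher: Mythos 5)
Your proof is correct, but it takes a different route from the paper. The paper derives the corollary as a consequence of the Cap--Ball-Duality (Theorem~\ref{theorem:cap_ball_duality}): it instantiates $p := v$ and $b := 0$, computes $\alpha = \tfrac12$, $r = s$, and $c = 0$, so the hemispherical cap $C(v,0)$ corresponds to the ball $B(0,s)$, and the claim follows because $s \geq \max_{x\in X}\norm{x}$ is exactly the condition that $X$ lies in the closed ball. You instead compute the last coordinate of $\simplified{S}(x,v,s)$ directly and obtain $\mdot{\hat{x}}{v} = \frac{s^2 - \sqnorm{x}}{\sqnorm{x}+s^2}$, whose sign is that of $s^2 - \sqnorm{x}$; this is an elementary, self-contained argument that does not rely on Theorem~\ref{theorem:cap_ball_duality} at all, and it even yields the "iff" form of the statement for free. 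The paper's version buys brevity and illustrates how the duality theorem is meant to be used in practice (including the correct handling of the cap boundary mapping to the ball boundary), while yours is more transparent and avoids importing the lengthy machinery of the duality proof for what is ultimately a one-line coordinate computation. Both are valid; your explicit formula for $\mdot{\hat{x}}{v}$ is a nice byproduct that the paper's argument does not surface.
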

\begin{proof}
	Choosing $p := v$ and $b := 0$, we obtain a hyperball with center at the origin and radius $s$.
	Iff $s \geq \max_{x \in X} \norm{x}$, then $\mdot{x}{v} \geq 0$ for all $x \in X$, i.e., all points in $S(X,v,s)$ lie on the closed hemi-hypersphere in direction $v$.
	Iff $s > \max_{x \in X} \norm{x}$, then $\mdot{x}{v} > 0$ for all $x \in X$, i.e., all points in $S(X,v,s)$ lie on the open hemi-hypersphere in direction $v$.
\end{proof}

Next, we can give closed forms for the inner product and squared Euclidean distance after (un-)embedding in terms of the same measures in the original space.
This opens the embedding up to kernel-based applications, where the spherical embedding is implicitly applied to the kernel space and for corrections in distance computations to account for the non-isometry of the embedding.
The distance equations can also be used to compute distances to, e.g., the hyperball centers implicitly in the embedding space.

\begin{corollary}
	Given any $s \in \reals_{>0}$ and assuming $v = (0,\ldots,0,1) \in \reals^{d+1}$.
	Let $x, y \in \reals^{d}$ and $\hat{x} = S(x,v,s), \hat{y} = S(x,v,s)$, then the following equations hold:
	\begin{align}
		\mdot{\hat{x}}{\hat{y}}
		&=
		1-2s^2\frac{
			\mdot{x}{x} -2\mdot{x}{y} + \mdot{y}{y}
		}{(\mdot{x}{x}+s^2)(\mdot{y}{y}+s^2)}
		=1-\frac{2s^2\sqnorm{x-y}}{(\sqnorm{x}+s^2)(\sqnorm{y}+s^2)}\\
		\sqnorm{\hat{x}-\hat{y}}
		&=
		\frac{4s^2\sqnorm{x-y}}{(\sqnorm{x}+s^2)(\sqnorm{y}+s^2)}\\
		\mdot{x}{y}
		&= s^2\frac{\mdot{\hat{x}}{\hat{y}} - \hat{x}_{d+1}\hat{y}_{d+1}}{(1+\hat{x}_{d+1})(1+\hat{y}_{d+1})}
		= s^2\frac{\mdot{\hat{x}}{\hat{y}} - \mdot{\hat{x}}{v}\mdot{\hat{y}}{v}}{(1+\mdot{\hat{x}}{v})(1+\mdot{\hat{y}}{v})}\\
		\sqnorm{x-y}
		&=
		\frac{s^2}{(1+\hat{x}_{d+1})(1+\hat{y}_{d+1})}\sqnorm{\hat{x}-\hat{y}}\\
		&= \frac{4s^2}{(4-\sqnorm{\hat{x}-v})(4-\sqnorm{\hat{y}-v})}\sqnorm{\hat{x}-\hat{y}}
	\end{align}
	\label{corollary:kernel_functions}
\end{corollary}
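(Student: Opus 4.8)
The plan is to compute the embedded coordinates explicitly and then reduce every claim to a single inner-product calculation together with two \enquote{bridging} identities. Writing $\hat{x} = S(x,v,s)$ via the simplified embedding $\simplified{S}$, I would first record its components: the first $d$ coordinates are $\hat{x}_{1,\ldots,d} = \frac{2s}{\sqnorm{x}+s^2}x$ and the last coordinate is $\hat{x}_{d+1} = \frac{s^2-\sqnorm{x}}{\sqnorm{x}+s^2}$. A quick check confirms $\sqnorm{\hat{x}} = 1$, so $\hat{x}$ indeed lies on the unit sphere; this fact is what makes the distance formulae cheap later. Throughout I treat $\hat{y} = S(y,v,s)$.

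For the first (forward) equation, I would expand $\mdot{\hat{x}}{\hat{y}} = \mdot{\hat{x}_{1,\ldots,d}}{\hat{y}_{1,\ldots,d}} + \hat{x}_{d+1}\hat{y}_{d+1}$ directly from the components, obtaining a single fraction with denominator $(\sqnorm{x}+s^2)(\sqnorm{y}+s^2)$. The numerator then simplifies, and comparing it against $(\sqnorm{x}+s^2)(\sqnorm{y}+s^2) - 2s^2\sqnorm{x-y}$ (after substituting $\sqnorm{x-y} = \sqnorm{x}-2\mdot{x}{y}+\sqnorm{y}$) verifies the claimed identity, including both the expanded and collapsed forms. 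The second equation is then immediate: since $\hat{x},\hat{y}$ are unit vectors, $\sqnorm{\hat{x}-\hat{y}} = 2 - 2\mdot{\hat{x}}{\hat{y}} = 2(1-\mdot{\hat{x}}{\hat{y}})$, and plugging in the forward inner-product formula yields exactly $\frac{4s^2\sqnorm{x-y}}{(\sqnorm{x}+s^2)(\sqnorm{y}+s^2)}$.

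For the three inverse equations I would use the simplified inverse $\simplified{S}^{-1}$, so that $x = s\,\hat{x}_{1,\ldots,d}/(1+\hat{x}_{d+1})$. The inner-product claim follows by splitting $\mdot{\hat{x}_{1,\ldots,d}}{\hat{y}_{1,\ldots,d}} = \mdot{\hat{x}}{\hat{y}} - \hat{x}_{d+1}\hat{y}_{d+1}$ and observing $\hat{x}_{d+1} = \mdot{\hat{x}}{v}$, which delivers both stated forms at once. The distance claims I would obtain by inverting the relations already proved, the key being the bridging identity $1+\hat{x}_{d+1} = \frac{2s^2}{\sqnorm{x}+s^2}$ (which is positive for every finite $x$, so the inverse is well defined); substituting this into the forward distance formula and solving for $\sqnorm{x-y}$ produces the fourth equation. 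The fifth (alternative) form follows from the further identity $\sqnorm{\hat{x}-v} = 2(1-\hat{x}_{d+1})$, i.e. $1+\hat{x}_{d+1} = \frac{1}{2}(4-\sqnorm{\hat{x}-v})$, which merely rewrites the denominator.

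I expect no serious conceptual obstacle, since every step is elementary algebra; the real work is organizing the computation so it stays manageable. The two payoff observations that keep the bookkeeping short are \emph{(i)} that both embedded points lie on the unit sphere, collapsing distances into inner products, and \emph{(ii)} the bridging identity $1+\hat{x}_{d+1} = 2s^2/(\sqnorm{x}+s^2)$, which is precisely the factor linking the original and embedded normalizations and lets one pass between the forward and inverse directions without re-deriving anything from scratch.
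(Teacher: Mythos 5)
Your proposal is correct and follows essentially the same route as the paper: insert the explicit components of $\simplified{S}$ and $\simplified{S}^{-1}$, use $\sqnorm{\hat{x}-\hat{y}}=2-2\mdot{\hat{x}}{\hat{y}}$ on the unit sphere, and split $\mdot{\hat{x}_{1,\ldots,d}}{\hat{y}_{1,\ldots,d}}=\mdot{\hat{x}}{\hat{y}}-\hat{x}_{d+1}\hat{y}_{d+1}$ for the inverse direction. The only cosmetic differences are that you recover the inverse distance formula by inverting the forward one via $1+\hat{x}_{d+1}=2s^2/(\sqnorm{x}+s^2)$ where the paper recomputes it from $\sqnorm{x}+\sqnorm{y}-2\mdot{x}{y}$, and you explicitly justify the final $(4-\sqnorm{\hat{x}-v})$ form, which the paper's appendix leaves implicit.
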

\begin{proof}
	The proofs consist of inserting the definitions and simplifying the resulting equations.
	The derivations are given in the Appendix.
\end{proof}

For moving from the spherical to the Euclidean space, we provide two formulae each, one using the last coordinate and one using the direction vector.
The formulae using the last coordinate is computationally more efficient, yet, the other formulae are a better fit for kernel-based applications.
The latter in conjunction with Theorem \ref{theorem:cap_ball_duality} also provide exact solutions for arbitrary $v$ when not discarding the last coordinate upon unembedding.
When discarding the last coordinate, as defined in $S^{-1}(\blank, v, s)$, the resulting balls -- already lying on a hyperplane orthogonal to $v$ -- get linearly projected to a slanted hyperplane -- orthogonal to $(0,\ldots,0,1)$ -- resulting in one of the dimensions getting \enquote{squished}.

\section{Embedded Hyperspherical Caps and Hyperellipsoids}

For embedding directions $v$ other than $(0,\ldots,0,1)$, the inverse image of a hyperspherical cap under $S^{-1}(\blank, v, s)$ is an ellipsoid, with all but one identical radii (semi-axes).
The single other radius is smaller and its axis is collinear to $v_{1,\ldots,d}$.

\begin{theorem}[Cap-Ellipsoid-Duality]
	For arbitrary $p,v \in \reals^{d+1}$ and $b \in \reals_{\geq 0}$ with $\norm{v} = 1$ and $b + \mdot{p}{v} > 0$, the image under $S^{-1}(\blank, v, s)$ of the hyperspherical cap $C(p,b)$ is a hyperellipsoid in $\reals^d$ whenever $v \neq (0,\ldots,0,1)$ and $s \in \reals_{>0}$.
	The center of the hyperellipsoid is
	$$
	c = S^{-1}\left(\frac{p - \alpha v}{\norm{p - \alpha v}}, v, s\right)
	~\text{where}~
	\alpha := \frac{1-b^2}{2(b+\mdot{p}{v})}.
	$$
	The smallest radius and the corresponding direction are
	$$
	r_1 = s\sqrt{\frac{(1-b^2)v_{d+1}^2}{(b+\mdot{p}{v})^2}}
	= s\sqrt{\frac{(1-b^2)}{(b+\mdot{p}{v})^2}} \vert v_{d+1} \vert
	,~\text{and}~
	a_1 = \frac{v_{1,\ldots,d}}{\norm{v_{1,\ldots,d}}}
	$$
	and the radius in all directions $a_2$ orthogonal to $a_1$ is
	$$
	r_2 = s\sqrt{\frac{1-b^2}{(b+\mdot{p}{v})^2}}.
	$$
\end{theorem}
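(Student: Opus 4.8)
The plan is to factor $S^{-1}(\blank, v, s)$ through a full-dimensional inversion followed by the coordinate drop, and to reduce the first factor to \reftheorem{theorem:cap_ball_duality} via a rotation. Write $L(Y) := 2s(Y+v)/\sqnorm{Y+v}$ for the lifted inverse, so that $S^{-1}(Y,v,s) = \pi(L(Y))$, where $\pi\colon \reals^{d+1}\to\reals^d$ drops the last coordinate. A short computation gives $\mdot{L(Y)}{v} = s$ for every $Y \in \nsphere{d}$ (using $\sqnorm{Y+v} = 2(1+\mdot{Y}{v})$), so $L$ maps the whole sphere into the affine hyperplane $H_A = \{w \in \reals^{d+1} : \mdot{w}{v} = s\}$, whose normal is $v$. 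As the paper already observes, the cap is first carried to a genuine round sphere inside this slanted $H_A$, and it is only the projection $\pi$ --- oblique to $H_A$ unless $v = (0,\ldots,0,1)$ --- that squishes the sphere into an ellipsoid. I would therefore prove the theorem in two stages: (i) identify the lifted image as a sphere in $H_A$, and (ii) compute how $\pi$ deforms it.

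For stage (i) I would use that $L$ is covariant under rotations fixing the origin. Fix a rotation $R$ with $Rv_0 = v$, where $v_0 = (0,\ldots,0,1)$, and let $L_0$ denote the north-pole version. Since $R$ preserves norms, $L(RY) = 2s\,R(Y+v_0)/\sqnorm{Y+v_0} = R\,L_0(Y)$, and because $C(p,b) = R\,C(R^{-1}p, b)$ this yields $L(C(p,b)) = R\,L_0(C(R^{-1}p,b))$. In the north-pole case the coordinate drop is the orthogonal projection onto $\reals^d$, which is an isometry on $H_A = \{w_{d+1}=s\}$; hence $L_0(C(R^{-1}p,b))$ is congruent to the ball of \reftheorem{theorem:cap_ball_duality}, i.e.\ a sphere in $\{w_{d+1}=s\}$ of radius $s\sqrt{(1-b^2)/(b+\mdot{p}{v})^2} = r_2$ (using $(R^{-1}p)_{d+1} = \mdot{R^{-1}p}{v_0} = \mdot{p}{v}$) centered at the lift of its center. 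Rotating by $R$ gives a sphere of the same radius $r_2$ in $H_A$, whose center is $\tilde c = L\bigl((p-\alpha v)/\norm{p-\alpha v}\bigr)$; I would check that $R$ carries the north-pole $\alpha$ to the stated $\alpha = (1-b^2)/(2(b+\mdot{p}{v}))$ and that $\pi(\tilde c) = c = S^{-1}\bigl((p-\alpha v)/\norm{p-\alpha v}, v, s\bigr)$.

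Stage (ii) is the actual computation. Crucially the lifted image is a \emph{round} sphere, so its image under the linear map $\pi$ restricted to the direction space $v^\perp$ of $H_A$ is an ellipsoid whose semi-axes are exactly the singular values of $\pi|_{v^\perp}$ scaled by $r_2$, along the left singular vectors --- no cross terms arise, as they would for a general ellipsoid. From $\sqnorm{\pi(w)} = \sqnorm{w} - w_{d+1}^2$ I would extract the singular values. The unit vector of $v^\perp$ with extremal last coordinate is $w^{\ast} \propto v_0 - v_{d+1}v$, for which $w^{\ast}_{d+1} = \norm{v_{1,\ldots,d}}$; it gives singular value $|v_{d+1}|$ and projects onto the $v_{1,\ldots,d}$ direction, i.e.\ onto $a_1 = v_{1,\ldots,d}/\norm{v_{1,\ldots,d}}$. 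Every $w \in v^\perp$ orthogonal to $w^{\ast}$ satisfies $w_{d+1}=0$ (one checks $\mdot{w}{w^{\ast}} = w_{d+1}/\norm{v_{1,\ldots,d}}$), hence has singular value $1$ and projects orthogonally to $v_{1,\ldots,d}$. Scaling $r_2$ accordingly gives $r_1 = |v_{d+1}|\,r_2 = s\sqrt{(1-b^2)v_{d+1}^2/(b+\mdot{p}{v})^2}$ along $a_1$ and $r_2$ in all orthogonal directions, while linearity sends $\tilde c$ to $c = \pi(\tilde c)$.

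I expect the main obstacle to be stage (i): cleanly transporting \reftheorem{theorem:cap_ball_duality} to arbitrary $v$ through $R$ while retaining the extra coordinate, and confirming that the lifted image is exactly a round sphere in $H_A$ with the claimed center and radius \emph{before} any squishing. Once that is secured, the singular-value analysis of stage (ii) is essentially forced. I would also flag the boundary case $v_{d+1}=0$ (still $v \neq v_0$), where $\pi|_{v^\perp}$ becomes singular: there $w^{\ast} = v_0$ projects to $0$, the formula correctly collapses to $r_1 = 0$, and the ellipsoid degenerates to a flat $(d-1)$-dimensional disc.
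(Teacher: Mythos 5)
Your proof is correct and follows essentially the same route as the paper: lift the cap via the full-dimensional inversion onto the slanted affine hyperplane orthogonal to $v$, where \reftheorem{theorem:cap_ball_duality} gives a round sphere of radius $r_2$, and then analyse the coordinate-dropping projection as a linear map that contracts by $\vert v_{d+1}\vert$ along $v_{1,\ldots,d}$ and is isometric on the orthogonal complement. The paper's version is only a sketch that leans on a figure for the \enquote{cosine between hyperplanes} step, so your rotation-covariance argument for transporting \reftheorem{theorem:cap_ball_duality} to arbitrary $v$ and your explicit singular-value computation supply rigor the paper omits, without changing the underlying idea.
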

\begin{proof}
	The proof follows from Theorem \ref{theorem:cap_ball_duality}, since the hyperspherical cap $C(p,b)$ is the image under $S$ of the hyperball $B(c',r)$ in $H_A(\reals^d,v,s)$ where the first $d$ coefficients of $c'$ equal $c$.
	The hyperplane on which all points in $H_A(\reals^d,v,s)$ live is orthogonal to $v$ and the final step of the inversive projection is to discard the last coefficient of the embedded vectors, which can be represented as a linear projection on the hyperplane with normal vector $(0,\ldots,0,1)$.
	The cosine between these two hyperplanes is $\mdot{v}{(0,\ldots,0,1)} = v_{d+1}$ and all directions orthogonal to $v$ are shared between the two hyperplanes.
	Accordingly, the radius in the direction of $v$ has to be scaled by the cosine between the two hyperplanes $v_{d+1}$, while the radius in all other directions remains unchanged.
	A visual proof is given in Figure \ref{fig:cap_ellipsoid_duality}.
\end{proof}
\begin{figure}
	\centering
	\tdplotsetmaincoords{80}{70}
	\begin{tikzpicture}[scale=2, tdplot_main_coords]
		\draw[thick,->] (0,0,0) -- (4.5,0,0) node[anchor=north]{$i$};
		\draw[thick,->] (0,0,0) -- (0,4,0) node[anchor=south]{$j$};
		\draw[thick,->] (0,0,0) -- (0,0,1) node[anchor=south]{$d+1$};
		\pgfmathsetmacro{\nbx}{1.5}
		\pgfmathsetmacro{\nby}{1.25}
		\pgfmathsetmacro{\nbz}{0}
		\pgfmathsetmacro{\vangle}{-30}
		\pgfmathsetmacro{\vy}{sin(\vangle)}
		\pgfmathsetmacro{\vz}{sqrt(1-\vy*\vy)}
		\pgfmathsetmacro{\halfside}{1.5}
		\pgfmathsetmacro{\radius}{1.25}
		\pgfmathsetmacro{\vplanelift}{\halfside * sqrt(1/(\vz*\vz)-1)}
		\draw[draw=none,fill=orange,opacity=0.2] (\nbx-\halfside,\nby-\halfside,\nbz) -- (\nbx+\halfside,\nby-\halfside,\nbz) -- (\nbx+\halfside,\nby+\halfside,\nbz) -- (\nbx-\halfside,\nby+\halfside,\nbz) -- cycle;
		\draw[draw=none,fill=blue,opacity=0.2] (\nbx-\halfside,\nby-\halfside,-\vplanelift) -- (\nbx+\halfside,\nby-\halfside,\nbz-\vplanelift) -- (\nbx+\halfside,\nby+\halfside,\vplanelift) -- (\nbx-\halfside,\nby+\halfside,\nbz+\vplanelift) -- cycle;
		\draw[dashed] (\nbx-\halfside,\nby,\nbz) -- (\nbx+\halfside,\nby,\nbz);
		\draw[dashed] (\nbx,\nby-\halfside,\nbz) -- (\nbx,\nby+\halfside,\nbz);
		\draw[dashed] (\nbx,\nby-\halfside,\nbz-\vplanelift) -- (\nbx,\nby+\halfside,\nbz+\vplanelift);
		\coordinate (NormalBase) at (\nbx,\nby,\nbz);
		\tdplotsetrotatedcoordsorigin{(NormalBase)}
		\tdplotsetrotatedcoords{0}{90}{0}
		\tdplotdrawarc[tdplot_rotated_coords,gray,opacity=.5]{(0,0,0)}{\radius*\vz}{0}{360}{}{};
		\tdplotsetrotatedcoords{0}{0}{0}
		\draw[thick,->,blue!70!black] (NormalBase) -- ($(NormalBase) + (0,\radius*\vy,\radius*\vz)$) node[anchor=south]{$v$};
		\draw[thick,->,blue!70!black,dashed] (NormalBase) -- ($(NormalBase) + (0,\radius*\vy,0)$) node[anchor=north west,yshift=.15em]{$v_{1,\ldots,d}$};
		\draw[thick,-,blue!70!black] ($(NormalBase) + (0,-.05,\radius*\vz)$) -- ($(NormalBase) + (0,.05,\radius*\vz)$);
		\node[blue!70!black,anchor=west,yshift=.2em] at ($(NormalBase) + (0,0,\radius*\vz)$) {$v_{d+1}$};

		\draw[blue!70!black,dotted] ($(NormalBase) + (0,\radius*\vy,0)$) -- ($(NormalBase) + (0,\radius*\vy,\radius*\vz)$);
		\draw[blue!70!black,dotted] ($(NormalBase) + (0,0,\radius*\vz)$) -- ($(NormalBase) + (0,\radius*\vy,\radius*\vz)$);
		\draw[thick,->,orange!70!black] (NormalBase) -- ($(NormalBase) + (0,0,\radius)$) node[anchor=south]{$(0,\ldots,0,1)$};
		\tdplotsetrotatedcoordsorigin{(NormalBase)}
		\tdplotsetrotatedcoords{0}{90}{0}
		\tdplotdrawarc[tdplot_rotated_coords,orange!50!blue!90!black,thick]{(0,0,0)}{.8}{180}{180-\vangle}{anchor=north,xshift=.15em}{$\theta$};
		\tdplotdrawarc[tdplot_rotated_coords,orange!50!blue!50!green!80!black,thick]{(0,0,0)}{.8}{180}{90-\vangle}{anchor=north east,xshift=1em,yshift=-.7em}{$\frac{\pi}{2}{-}\theta$};
		\tdplotdrawarc[tdplot_rotated_coords,orange!50!blue!90!black,thick]{(0,0,0)}{.8}{90}{90-\vangle}{anchor=east,yshift=-.15em}{$\theta$};
		\tdplotsetrotatedthetaplanecoords{90-\vangle}
		\pgfmathsetmacro{\squishedradius}{\radius * \vz}
		\tdplotdrawarc[tdplot_rotated_coords]{(NormalBase)}{\radius}{0}{360}{}{}
		\draw (NormalBase) ellipse ({\radius} and {\squishedradius});
		\pgfmathsetmacro{\nLines}{50}
		\pgfmathsetmacro{\stepAngle}{360/\nLines}
		\foreach \i in {1,...,\nLines}{
			\pgfmathsetmacro{\angle}{\i * \stepAngle}
			\pgfmathsetmacro{\cosAngle}{cos(\angle)}
			\pgfmathsetmacro{\sinAngle}{sin(\angle)}
			\pgfmathsetmacro{\x}{\radius * \sinAngle}
			\pgfmathsetmacro{\y}{\squishedradius * \cosAngle}
			\draw[dotted,opacity=.5] ($(NormalBase)+(\x,\y,0)$) -- ($(NormalBase)+(\x,\y,-\radius * \vy * \cosAngle)$);
		};
		\pgfmathsetmacro{\nbv}{\nby*\vy+\nbz*\vz}
		\draw[<-,thick,blue!70!black] (0,0,0) -- (0,\nbv*\vy,\nbv*\vz) node[midway,anchor=north east] {$sv$};
		\coordinate (NormalBase2) at (0,\nbv*\vy,\nbv*\vz);
		\tdplotsetrotatedcoordsorigin{(NormalBase2)}
		\tdplotsetrotatedcoords{0}{90}{0}
		\tdplotdrawarc[tdplot_rotated_coords,blue!70!black]{(0,0,0)}{.15}{180-\vangle}{270-\vangle}{yshift=-.1em,xshift=.25em}{$\cdot$};
		\draw[fill] ($(NormalBase) + (+\radius,0,0)$) circle (.5pt);
		\draw[fill] ($(NormalBase) + (-\radius,0,0)$) circle (.5pt);
		\draw[fill] ($(NormalBase) + (0,+\radius*\vz,0)$) circle (.5pt);
		\draw[fill] ($(NormalBase) + (0,-\radius*\vz,0)$) circle (.5pt);
	\end{tikzpicture}
	\caption{
		A visual proof that the radius in direction $v_{1,\ldots,d}$ is the only one varying from the radius in the spherical case and that its scaling factor is $v_{d+1}$.
		$i$ and $j$ indicate arbitrary basis vectors orthogonal to $(0,\ldots,0,1)$, not necessarily parallel to coordinate axes.
		The cosine of $\theta$ equals $v_{d+1}$.
	}
	\label{fig:cap_ellipsoid_duality}
\end{figure}
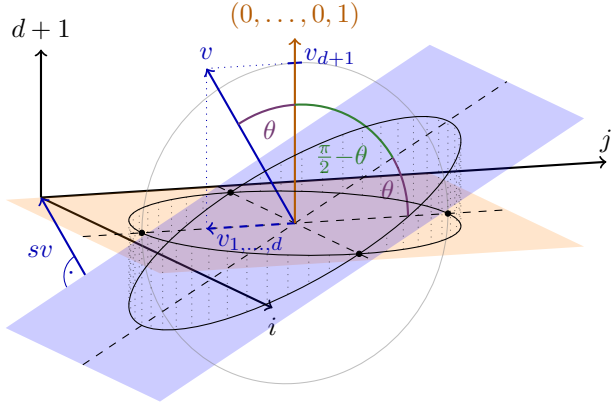

The Cap-Ball-Duality can then be interpreted as the limit case of the Cap-Ellipsoid-Duality, where the difference between the \enquote{squished} and the other dimensions vanishes and the \enquote{squished} axis therefore becomes arbitrary and ill-defined.
We neither provide formulae for the inverse direction of the duality, nor for dot products, and squared distances for the Cap-Ellipsoid-Duality.
Whilst those functions should have a closed form, they are likely very large and cover a much less useful case of ellipsoids with exactly one axis of different length.
The inclined reader can likely guess appropriate functions exploiting the observation, that the original space is a slanted view of the spherical case and distances are compressed along the \enquote{squished} axis, but proofing these guessed functions is certainly cumbersome.
Otherwise, one can avoid the projection to one less dimension in the unembedded space and use the Cap-Ball-Duality instead.
When moving from spherical data to a non-spherical representation, that may be a sufficient representation and the formulae for products and distances for the Cap-Ball-Duality apply.
Since the unembedding scales the space along the embedding direction, it should be possible to chain multiple embeddings to achieve an equivalent duality between general ellipsoids and hyperspherical caps.

\begin{hypothesis}[Generalized Cap-Ellipsoid-Duality]
	For arbitrary ellipsoids $E \subseteq \reals^d$ with axes $a_1,\ldots,a_d$ and radii $r_1,\ldots,r_d$ such that $r_i \leq r_d$ for all $i \in \{1,\ldots,d-1\}$, there exist $v_1,\ldots,v_{d-1}, s_1,\ldots,s_{d-1}$, and $b$ such that the repeated embedding of $E$ is the hyperspherical cap $C(v_{d-1}, b)$, i.e.,
	\begin{equation*}
		S(\ldots(S(E, v_1, s_1),\ldots),v_{d-1},s_{d-1}) = C(v_{d-1}, b)
		\,.
	\end{equation*}
	Further, $a_i = (v_i)_{1,\ldots,d}$ for all $i$ and a closed form for the center exists.
\end{hypothesis}

\section{Parameter Choice}
To embed the data with the proposed embedding vector $v = (0,\ldots,0,1)$, we still need to choose the scaling parameter~$s$.
The choice of that parameter controls the distribution of samples on the resulting hypersphere.
For infinitesimal~$s$, the samples cluster around~$-v$ whereas for infinite~$s$, the samples cluster around~$v$.
Ideally for most down-stream applications, the samples would \enquote{evenly distribute} over the sphere as to, e.g., not provoke numerical instabilities.
As a criterion for a good spread of the data, we propose to consider the intrinsic dimensionality of the embedded data.
The intrinsic dimensionality of the data is the number of dimensions that are necessary to represent the space populated by the data.
We consider that measure as a good criterion, since it assigns the same value to the worst case on both ends of the value space of~$s$ and a higher value everywhere inbetween.
Intuitively, targeting a high intrinsic dimensionality can be motivated by best preserving the data complexity, as a reduction in ID indicates a loss of structural information.
If the entire dataset collapses to $v$ or $-v$ on the sphere, the intrinsic dimensionality should be minimum.
For any point inbetween, the intrinsic dimensionality should be larger with a likely maximum hopefully close to the intrinsic dimensionality of the original space.
We expect that maximum to provide the best spreading of the data over the sphere and therefore be the best choice for~$s$.
Since the embedded data will lie on a sphere, we propose to employ the ABID estimator of \cite{Thordsen2022}, which considers the pairwise cosine distribution of samples.
The ABID estimator is typically applied in a local manner, i.e., evaluated over a small neighborhood around each point in the sample, commonly using the $k$-nearest neighbors.
We here instead suggest to use it as a global measure of intrinsic dimensionality, by aggregating it over the entire (embedded) dataset.
That implies that we consider the dataset to populate a linear subspace of the original space.
We thereby do not explicitly attempt to cover the local structure of the data, but rather the global structure.
The ABID estimator of \cite{DBLP:conf/sisap/ThordsenS23} is defined as
\begin{align}
	\mathrm{ABID}(X) = \mathbb{E}_{x,y \in X}\left[ \cos(x,y)^2 \right]^{-1}
\end{align}
We are then interested in the value of $s$ that maximizes the ABID estimate evaluated over the embedded samples.
We expect the ABID of embedded samples to be upper bounded by the ABID value on the unembedded space plus one.
We will later provide empirical evidence that these values can be expected to be identical.
In the unembedded space, we propose to mean-center the dataset, such that the cosines are computed relative to the mean of the dataset.
This should provide an as-large-as-possible value for the ABID estimate, since if all points were shifted far away from the origin, all cosines would be maximized.
The data would then be almost indistinguishable from a single point.
In the embedded space, the origin, i.e., the center of the sphere, will be the point relative to which the cosines are computed.
That results in minimum values of $1$ for the ABID estimate, since the center of the sphere virtually becomes part of the dataset, which then lies on the line through the origin in direction of $v$.
Solving the criteria for an optimal $s$ analytically, however, requires intricate knowledge of the distribution of the dataset, which we would rather avoid, since it limits the parameter choice to specific distributions.
Instead, we perform a parameter sweep for $s$ over a reasonable range.
We propose to center this range on the mean vector norm of the samples in the original space.
Given two samples $x$ and $y$ and their respective embedded points $\widehat{x}$ and $\widehat{y}$, the ratio of cosines prior and posterior to embedding equates
\begin{align*}
	\frac{\cos(x,y)}{\cos(\widehat{x},\widehat{y})}
	&\parbox{5em}{\centering$=$} \frac{
		\frac{\sqnorm{x} + \sqnorm{y} - \sqnorm{x-y}}{2\norm{x}\norm{y}}
	}{
		\frac{2-\sqnorm{\widehat{x}-\widehat{y}}}{2}
	}
	\shortintertext{Which when inserting the solution for $\sqnorm{\widehat{x}-\widehat{y}}$ reduces to}
	&\parbox{5em}{\centering$=$} \frac{
		\sqnorm{x} + \sqnorm{y} - \sqnorm{x-y}
	}{
		\left(2-\frac{4s^2\sqnorm{x-y}}{\left(\sqnorm{x}+s^2\right)\left(\sqnorm{y}+s^2\right)}\right)
		\norm{x}\norm{y}
	}\\
	&\parbox{5em}{\centering$=$} \frac{
		\left(\sqnorm{x} + \sqnorm{y} - \sqnorm{x-y}\right)
		\left(\sqnorm{x}+s^2\right)\left(\sqnorm{y}+s^2\right)
	}{
		\bigg(
			2\left(\sqnorm{x}+s^2\right)\left(\sqnorm{y}+s^2\right)
			-4s^2\sqnorm{x-y}
		\bigg)
		\norm{x}\norm{y}
	}
	\shortintertext{
		The distribution of that term clearly depends on the distribution of the samples in the original space.
		If we assert $s$ to be the mean vector norm of the samples and approximate $\norm{x}$ and $\norm{y}$ by the same value, the term further simplifies to
	}
	&\parbox{5em}{$\centering\stackrel{\norm{x},\norm{y} \to s}{=}$} \frac{
		\left(s^2+s^2-\sqnorm{x-y}\right)
		\left(s^2+s^2\right)\left(s^2+s^2\right)
	}{
		\left(
			2\left(s^2+s^2\right)\left(s^2+s^2\right)
			-4s^2\sqnorm{x-y}
		\right)
		s^2
	}\\
	&\parbox{5em}{\centering$=$} \frac{
		\left(2s^2-\sqnorm{x-y}\right)
		4s^4
	}{
		\left(
			8s^4
			-4s^2\sqnorm{x-y}
		\right)
		s^2
	} = 1
\end{align*}
So as all vector norms approach $s$, the ratio of all cosines approaches $1$ as well.
Accordingly, in the limit, the distribution of cosines in the embedded space should be identical to the distribution of cosines in the unembedded space and thereby the ABID estimate should be identical.
The limit case is of course likely not observed, whereby that choice of $s$ is only approximately optimal.
In our experiments, however, this choice for $s$ was always close to the optimal value and off by less than a multiple of $10$.
Accordingly, a logarithmically spaced grid from $0.1$ to $10$ times the mean vector norm of the samples in the original space should be sufficient to cover the optimal value of $s$ and a small number of grid values, e.g. 20, should suffice to find a suitable value.
In case of outliers with a very large vector norm, the initial value can be adjusted to the median vector norm.
As can be seen in \reffig{fig:embedding_lid}, the ABID estimates are capped at one plus the dimensionality of the original space for synthetic datasets.
The corresponding pairwise cosines before and after embedding are visualized in \reffig{fig:cosine_ratio} and approximate the identity function.

\begin{figure}
	\centering
	\begin{tikzpicture}
		\node (A) at (0,0) {\includegraphics[scale=.8]{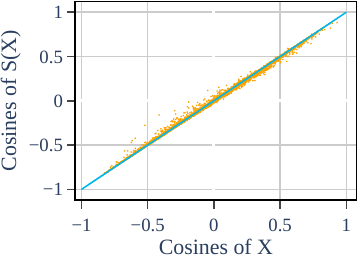}};
		\node (B) at (5.5,0) {\includegraphics[scale=.8]{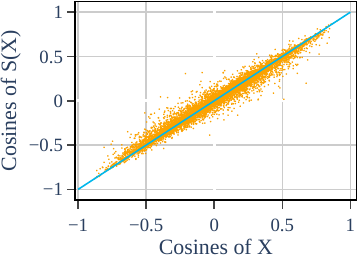}};
		\node (C) at (11,0) {\includegraphics[scale=.8]{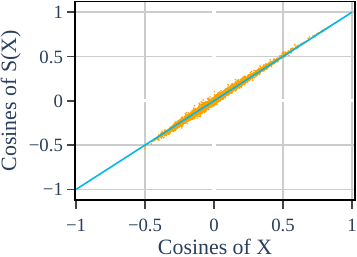}};
		\node[anchor=south, xshift=1em] at (A.north) {Uniform ball};
		\node[anchor=south, xshift=1em] at (B.north) {Univariate normal};
		\node[anchor=south, xshift=1em] at (C.north) {MNIST};
	\end{tikzpicture}
	\caption{
		Pairwise cosines before and after spherical embedding of mean-centered data.
		From each distribution we drew 500 samples.
		For the generated datasets we used 10 dimensional distributions.
		The visualized cosines belong to 10k random sample pairs.
		The blue line indicates the identity function.
	}
	\label{fig:cosine_ratio}
\end{figure}

\begin{figure}
	\centering
	\begin{tikzpicture}
		\node (A) at (0,0) {\includegraphics[scale=.8]{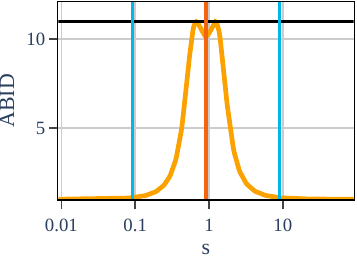}};
		\node (B) at (5.5,0) {\includegraphics[scale=.8]{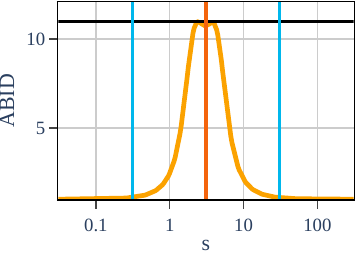}};
		\node (C) at (11,0) {\includegraphics[scale=.8]{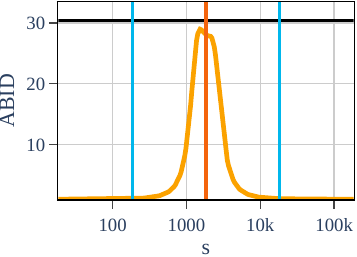}};
		\node[anchor=south, xshift=1em] at (A.north) {Uniform ball};
		\node[anchor=south, xshift=1em] at (B.north) {Univariate normal};
		\node[anchor=south, xshift=1em] at (C.north) {MNIST};
	\end{tikzpicture}
	\caption{
		ABID estimates over spherical embeddings with varying $s$ parameter.
		From each distribution we drew 500 samples.
		For the generated datasets we used 10 dimensional distributions.
		The horizontal black line indicates the ABID estimate of the original dataset plus one.
		The red vertical line indicates the mean vector norm.
		The blue vertical lines indicate $0.1$ and $10$ times the mean vector norm, respectively.
	}
	\label{fig:embedding_lid}
\end{figure}

\section{Relevance to Machine Learning}
The Cap-Ball-Duality provides an algorithmic solution to apply sphere-based algorithms to general Euclidean space and to translate between separating hyperspheres and hyperplanes.
The former allows to expand spherical algorithms to general Euclidean space, such as ScaNN of \cite{ScaNN} or HIOB of \cite{DBLP:conf/sisap/ThordsenS23}, and to solve ball-based problems with hyperplanes, such as the minimal bounding sphere problem.
The latter allows to exchange the separating geometry, e.g., in support vector machines or spatial indexes, to work with balls instead of hyperplanes and vice-versa.
The results from these adaptations are increasingly approximate for small biases/large radii, but allow for a rapid prototyping or cheap approximation of algorithms in the respective other geometry.

\begin{figure}
	\centering
	\begin{tikzpicture}
		\node (A) at (0,0) {\includegraphics[scale=.8]{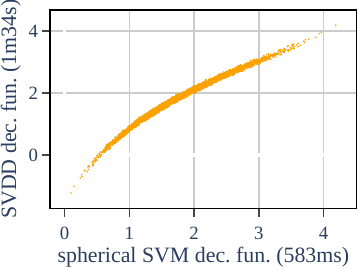}};
		\node (B) at (8,0) {\includegraphics[scale=.8]{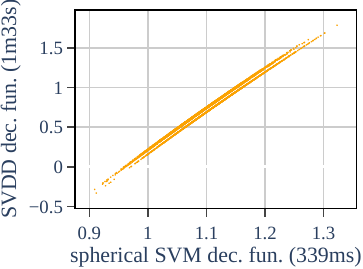}};
		\node[anchor=south, xshift=1em] at (A.north) {1 Gaussian};
		\node[anchor=south, xshift=1em] at (B.north) {3 Gaussians};
	\end{tikzpicture}
	\caption{
		The decision function values for SVDDs and SVMs fitted to the original and embedded data, respectively.
		Left shows the results for a Gaussian distribution in 10 dimensions, right for three Gaussian blobs in 10 dimensions.
		The datasets of 5000 points each were iteratively shifted to the center of the SVM induced sphere to obtain an equal shift in distances in all directions.
		The shifting process is included in the displayed fitting times.
	}
	\label{fig:svm_svdd}
\end{figure}
\begin{figure}
	\centering
	\begin{tikzpicture}
		\node[anchor=north west] (A) at (0,0) {\includegraphics[scale=.75]{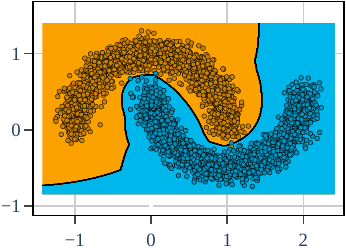}};
		\node[anchor=north west] (B) at (4.75,0) {\includegraphics[scale=.75]{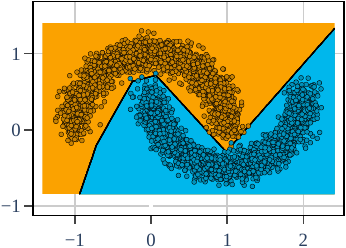}};
		\node[anchor=north west] (C) at (9.5,0) {\includegraphics[scale=.75]{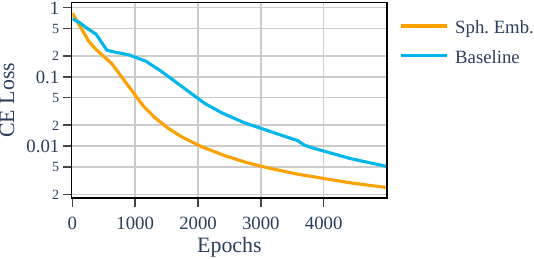}};
		\node[anchor=south, xshift=1em] at (A.north) {Spherical Embedding};
		\node[anchor=south, xshift=1em] at (B.north) {Baseline};
		\node[anchor=south, xshift=-1em] at (C.north) {Training Loss};
	\end{tikzpicture}
	\caption{
		The decision boundaries and training losses for neural networks trained with spherical embedding and ReLU as first activation, respectively.
		The baseline network consisted of three fully connected layers, with shapes $2 \times 2$, $2 \times 16$ and $16 \times 2$ with ReLU activations inbetween.
		The spherical embedding network used the $S$ with a learned $s$ as first activation and therefore had a $3 \times 16$ second layer.
	}
	\label{fig:neural_nets}
\end{figure}
\begin{figure}
	\caption{
		Recall of ScaNN and Annoy spatial indexes for spherical and non-spherical data before and after (un)embedding.
		Highlighted recall values are the best for each dataset and index.
		Shaded cells indicate values that we would have expected to improve over the \enquote{Direct} columns.
	}
	\label{table:ann_results}
	\begin{center}
		\begin{tabular}{|l|c|c|c|c|c|c|c|c|}
			\hline
			\multirow{2}{*}{Dataset} & \multirow{2}{*}{d} & \multirow{2}{*}{Spherical} & \multicolumn{3}{c|}{ScaNN $10@10$-recall} & \multicolumn{3}{c|}{Annoy $10@10$-recall} \\ \cline{4-9} 
			 & & & Direct & Emb. & Unemb. & Direct & Emb. & Unemb. \\ \hline
			Normalized Blobs & 50 & Yes & 72.78\% & \textbf{72.96\%} & 26.83\% & 35.70\% & 35.99\% & \cellcolor[HTML]{F4E5CC}\textbf{41.79\%} \\ \hline
			Normalized Blobs & 100 & Yes & \textbf{80.95\%} & \textbf{80.95\%} & 26.73\% & 26.10\% & 26.38\% & \cellcolor[HTML]{F4E5CC}\textbf{30.18\%} \\ \hline
			GloVe & 100 & Yes & 89.95\% & \textbf{90.08\%} & 54.84\% & 7.13\% & 7.15\% & \cellcolor[HTML]{F4E5CC}\textbf{16.84\%} \\ \hline
			NY Times & 256 & Yes & \textbf{81.11\%} & 80.80\% & 61.44\% & 14.50\% & 14.82\% & \cellcolor[HTML]{F4E5CC}\textbf{21.52\%} \\ \hline
			Last.fm & 64 & Yes & \textbf{87.93\%} & 86.35\% & 32.32\% & 16.99\% & \textbf{18.44\%} & \cellcolor[HTML]{F4E5CC}3.89\% \\ \hline
			Blobs & 50 & No & 38.93\% & \cellcolor[HTML]{F4E5CC}\textbf{70.04\%} & 28.13\% & \textbf{41.54\%} & 34.23\% & 40.88\% \\ \hline
			Blobs & 100 & No & 37.38\% & \cellcolor[HTML]{F4E5CC}\textbf{74.99\%} & 25.58\% & 30.11\% & 27.25\% & \textbf{30.48\%} \\ \hline
			Fashion MNIST & 784 & No & \textbf{99.85\%} & \cellcolor[HTML]{F4E5CC}75.69\% & 65.82\% & \textbf{85.59\%} & 66.84\% & 65.57\% \\ \hline
			SIFT & 128 & No & \textbf{98.39\%} & \cellcolor[HTML]{F4E5CC}84.05\% & 55.90\% & \textbf{35.74\%} & 26.64\% & 32.88\% \\ \hline
			\end{tabular}
	\end{center}
\end{figure}

Using a linear SVM to separate spherically embedded data from $-v$, for example, gives a cheap approximation for the order of scores of the Support Vector Data Descriptor \citep[SVDD]{DBLP:journals/ml/TaxD04} -- an adaptation of the SVM designed to fit a bounding sphere around the data.
The strong correlation between these two measures as well as the significant speedup is displayed in \reffig{fig:svm_svdd}.
While the QP-solver-based SVDD implementation\footnote{\url{https://github.com/iqiukp/SVDD-Python}} required about 1.5 minutes for each of the tasks, the spherical embedding-based SVM required only half a second to solve multiple SVM tasks for proper fitting.
The results of the SVM and SVDD are not identical, but the order of the scores is mostly preserved.

Similarly, the performance of spatial indexes for approximate nearest neighbor search can be improved by enforcing the geometry on which they work best.
The (un)embedding of course introduces an approximation error, though we expect it to be small enough for larger datasets to not significantly impact the recall of already approximate approaches.
The ScaNN index \citep{ScaNN}, for example, works best for cosine distance, since it is based on product quantization.
By embedding the data onto a hypersphere, we can approximately translate the Euclidean distances of non-spherical data into cosine distances.
The Annoy index by Bernhardsson\footnote{\url{https://github.com/spotify/annoy}} is based on projections to random interpoint axes in the dataset.
That approach essentially quantizes the squared Euclidean distance and we expect it to work better on non-spherical data.
\reftable{table:ann_results} shows the $10@10$-recall of the ScaNN and Annoy indexes on various datasets with and without spherical embedding.
The datasets were mostly taken from the ANN-Benchmarks\footnote{\url{https://ann-benchmarks.com/}} and toy datasets of 10 (normalized) gaussian blobs were added.
Non-spherical datasets were mean centered in advance.
The spherical embeddings were computed with $s$ equal to the mean vector norm.
For the spherical unembeddings the datasets were first normalized, potentially losing information on non-spherical datasets, and then rotated such that the mean vector was aligned with $v = (0,\ldots,0,1)$.
The rotation then ensured, that the mean direction gets mapped to the origin of the unembedded space.
The unembedding then used a constant $s$ value of $1$.
We did not tune any of the parameters of ScaNN and Annoy, but used constant parameters for all runs, since we were only interested in the relative improvement due to (un)embedding the data.
The recall of ScaNN and Annoy improves for most of the datasets significantly, where we (un)embedded the data from a less appropriate into a more appropriate space.
Exceptions were the Last.fm, Fashion MNIST and SIFT datasets.
Unembedding non-spherical data or embedding spherical data can at best improve the recall slightly.
Thus, the recall does not always improve and identifying cases in which spherical (un)embedding is advantageous remains a matter of future research, but in the cases where an improvement was expected and observed, it was quite significant.

The spherical embedding could further be applied in neural networks as an activation function, to virtually replace the dot products with soft distance measures to learned centers.
A visual example is given in \reffig{fig:neural_nets}, where two (almost) identical networks are trained on the moons dataset with and without spherical embedding as the first activation.
Not only does the spherical embedding network converge faster, it produces \enquote{softer} decision boundaries, since the decisions in original space are based on distances to virtual centers rather than hyperplanes.
To allow for a shift of the center of inversion, a linear layer was added before the spherical embedding.
For the spherical embedding network, this layer was initialized with the identity matrix and zero bias, yet randomly for the baseline model, that used a ReLU activation instead of the embedding.

These examples are some inspirations as to where the results of this paper can be leveraged.
Aside from potentially improving the quality of approaches tailored to either spherical or non-spherical data in various machine learning tasks, the spherical embedding adds almost no computational cost.
Translating the caps to balls or vice-versa after training, allows to skip the (un)embedding process after training, although existing libraries need to include that shortcut programmatically.

\section{Conclusion}
We have proposed spherical (un)embedding of data using arbitrarily-dimensional spherical inversion and introduced an explicit recommendation for the directional vector $v$ and a selection scheme for the scale parameter $s$ of these embeddings.
Using these (un)embedding functions allows to apply algorithms to other domains or to translate data into a more appropriate domain, achieving improved execution time or quality on outlier detection, classification, and spatial indexing.
We have further given explicit formulae to translate between hyperspherical caps in the spherically embedded space and balls in the original space.
This allows to reduce the computational overhead of the (un)embedding process after training by avoiding the (un)embedding altogether and instead evaluating distances or dot products in the task domain.
By providing closed forms for inner products and distances in the respective other space using only functions of the original space, it is possible to apply these results to kernel-based approaches as well.
This article proves the relevant formulae for future research on methods that exploit the Cap-Ball-Duality in machine learning research.

\bibliography{main}
\bibliographystyle{tmlr}

\clearpage
\appendix
\section{Appendix}

\begin{proof}[Proof of \reftheorem{theorem:cap_ball_duality}]
	Let $S(x)$ and $S^{-1}(x)$ be shorthand for $\simplified{S}(x, v, s)$ and $\simplified{S}^{-1}(x, v, s)$, respectively.
	Let $p$ and $b$ be the directional unit-length vector and bias of an arbitrary hyperspherical cap satisfying $b+p_{d+1} > 0$.
	We choose $c$ and $r$ as described above.
	Since $r$ decreases for increasing $b$, a smaller cap results in a smaller ball and vice versa.
	By continuity of $S,~S^{-1},$ and our definition of $r$, it suffices to show, that the boundary of the cap and the ball are images of each other under $S$ and $S^{-1}$.
	We, therefore, assume an arbitrary $x \in \nsphere{d}$ with $\mdot{x}{p} = b$ and intend to show that $\norm{S^{-1}(x) - c} = r$.
	We will make use of the fact, that for such $x$ holds $\Vert x + v \Vert = 2(1+x_{d+1})$ since
	\begin{eqnarray*}
		\Vert x + v \Vert^2 &=& \mdot{x+v}{x+v}
		= \mdot{x}{x} + \mdot{v}{v} + 2\mdot{x}{v}
		= 2(1+x_{d+1}).
	\end{eqnarray*}
	Let $\hat{c} := S(c) = \frac{p - \alpha v}{\norm{p - \alpha v}}$, then
	\allowdisplaybreaks
	\begin{align*}
		&\norm{S^{-1}(x) - S^{-1}(\hat{c})}\\
		&= \norm{2s \left(\frac{x + v}{\sqnorm{x + v}}\right)_{1,\ldots,d} - 2s \left(\frac{\hat{c} + v}{\sqnorm{\hat{c} + v}}\right)_{1,\ldots,d}}\\
		&= \norm{2s \left(\frac{x}{\sqnorm{x + v}}\right)_{1,\ldots,d} - 2s \left(\frac{\hat{c}}{\sqnorm{\hat{c} + v}}\right)_{1,\ldots,d}}\\
		&= s \norm{\left(\frac{x}{1+x_{d+1}} - \frac{\hat{c}}{1+\hat{c}_{d+1}}\right)_{1,\ldots,d}}\\
		&= s \sqrt{\sum_{i=1}^d \frac{x_i^2}{(1+x_{d+1})^2} + \frac{\hat{c}_i^2}{(1+\hat{c}_{d+1})^2} - 2\frac{x_i\hat{c}_i}{(1+x_{d+1})(1+\hat{c}_{d+1})}}\\
		&= s \sqrt{
			\frac{\sqnorm{x_{1,\ldots,d}}}{(1+x_{d+1})^2}
			+ \frac{\sqnorm{\hat{c}_{1,\ldots,d}}}{(1+\hat{c}_{d+1})^2}
			- 2\frac{\mdot{x_{1,\ldots,d}}{\hat{c}_{1,\ldots,d}}}{(1+x_{d+1})(1+\hat{c}_{d+1})}
		}\\
		&= s \sqrt{
			\frac{1-x_{d+1}^2}{(1+x_{d+1})^2}
			+ \frac{1-\hat{c}_{d+1}^2}{(1+\hat{c}_{d+1})^2}
			- 2\frac{\mdot{x_{1,\ldots,d}}{\hat{c}_{1,\ldots,d}}}{(1+x_{d+1})(1+\hat{c}_{d+1})}
		}\\
		&= s \sqrt{
			\frac{1-x_{d+1}}{1+x_{d+1}}
			+ \frac{1-\hat{c}_{d+1}}{1+\hat{c}_{d+1}}
			- 2\frac{\mdot{x_{1,\ldots,d}}{\hat{c}_{1,\ldots,d}}}{(1+x_{d+1})(1+\hat{c}_{d+1})}
		}\\
		&= s \sqrt{
			\frac{
				(1-x_{d+1})(1+\hat{c}_{d+1})
				+ (1-\hat{c}_{d+1})(1+x_{d+1})
				- 2\mdot{x_{1,\ldots,d}}{\hat{c}_{1,\ldots,d}}
			}{(1+x_{d+1})(1+\hat{c}_{d+1})}
		}\\
		&= s \sqrt{
			\frac{
				2 - 2x_{d+1}\hat{c}_{d+1}
				- 2\mdot{x_{1,\ldots,d}}{\hat{c}_{1,\ldots,d}}
			}{(1+x_{d+1})(1+\hat{c}_{d+1})}
		}\\
		&= s \sqrt{
			\frac{2 - 2\mdot{x}{\hat{c}}}{(1+x_{d+1})(1+\hat{c}_{d+1})}
		}\\
		&= \sqrt{2}s \sqrt{
			\frac{1 - \mdot{x}{\hat{c}}}{(1+x_{d+1})(1+\hat{c}_{d+1})}
		}\\
		\intertext{At this point, we have reduced the formula as far as possible in terms of $\hat{c}$ and now substitute $\hat{c}$ with the definition given in the Theorem.}
		&= \sqrt{2}s \sqrt{
			\frac{\norm{p - \alpha v} - \mdot{x}{p - \alpha v}}{(1+x_{d+1})\left(\norm{p - \alpha v}+\left(p - \alpha v\right)_{d+1}\right)}
		}\\
		&= \sqrt{2}s \sqrt{\frac{
			\norm{p - \alpha v} - b + \alpha x_{d+1}
		}{
			(1+x_{d+1})\left(\norm{p - \alpha v}+p_{d+1} - \alpha\right)
		}}\\
		&= \sqrt{2}s \sqrt{\frac{
			\sqrt{1+\alpha^2-2\alpha p_{d+1}} - b + \alpha x_{d+1}
		}{
			(1+x_{d+1})\left(\sqrt{1+\alpha^2-2\alpha p_{d+1}}+p_{d+1} - \alpha\right)
		}}\\
		&= \sqrt{2}s \sqrt{\frac{
			\sqrt{1+\left(\frac{1-b^2}{2(b+p_{d+1})}\right)^2-\frac{1-b^2}{b+p_{d+1}} p_{d+1}} - b + \frac{1-b^2}{2(b+p_{d+1})} x_{d+1}
		}{
			(1+x_{d+1})\left(\sqrt{1+\left(\frac{1-b^2}{2(b+p_{d+1})}\right)^2-\frac{1-b^2}{b+p_{d+1}} p_{d+1}}+p_{d+1} - \frac{1-b^2}{2(b+p_{d+1})}\right)
		}}\\
		&= \sqrt{2}s \sqrt{\frac{
			\sqrt{4(b+p_{d+1})^2+(1-b^2)^2-4(b+p_{d+1})(1-b^2) p_{d+1}} - 2b(b+p_{d+1}) + (1-b^2) x_{d+1}
		}{
			(1+x_{d+1})\left(\sqrt{4(b+p_{d+1})^2+(1-b^2)^2-4(b+p_{d+1})(1-b^2) p_{d+1}} + 2p_{d+1}(b+p_{d+1}) - (1-b^2)\right)
		}}\\
		&= \sqrt{2}s \sqrt{\frac{
			\sqrt{
				b^4
				+ 4b^3p_{d+1}
				+ 4b^2p_{d+1}^2
				+ 2b^2
				+ 4bp_{d+1}
				+ 1
			}
			- 2b^2
			- 2bp_{d+1}
			+ (1-b^2) x_{d+1}
		}{
			(1+x_{d+1})\left(
				\sqrt{
					b^4
					+ 4b^3p_{d+1}
					+ 4b^2p_{d+1}^2
					+ 2b^2
					+ 4bp_{d+1}
					+ 1
				}
				+ 2bp_{d+1}
				+ 2p_{d+1}^2
				- (1-b^2)
			\right)
		}}\\
		&= \sqrt{2}s \sqrt{\frac{
			(b^2+2bp_{d+1}+1)
			- 2b^2
			- 2bp_{d+1}
			+ (1-b^2) x_{d+1}
		}{
			(1+x_{d+1})\left(
				(b^2+2bp_{d+1}+1)
				+ 2bp_{d+1}
				+ 2p_{d+1}^2
				- (1-b^2)
			\right)
		}}\\
		&= \sqrt{2}s \sqrt{\frac{
			(1-b^2) + (1-b^2) x_{d+1}
		}{
			(1+x_{d+1})2\left(b + p_{d+1}\right)^2
		}}\\
		&= s \sqrt{\frac{1-b^2}{\left(b + p_{d+1}\right)^2}}\\
		&= r
	\end{align*}
	Thus all points from the boundary of the cap have a distance of $r$ to the given center $c$, forming a sphere of radius $r$.
\end{proof}

\begin{proof}[Proof of \reftheorem{theorem:cap_ball_duality2}]
	Inserting the solution for $\alpha$ in \reftheorem{theorem:cap_ball_duality} into the equation for $r$ and solving for $b$ gives
	\begin{eqnarray*}
		b &=& \frac{s \sqrt{s^2+(1-p_{d+1}^2)r^2} - p_{d+1}r^2}{r^2+s^2}
	\end{eqnarray*}
	Solving the equation for $r$ for $\alpha$ and substituting $b$ with the solution for $b$ gives
	\begin{eqnarray*}
		\alpha &=& \frac{r^2\left(p_{d+1}s + \sqrt{s^2 + (1-p_{d+1}^2)r^2}\right)}{2s(r^2+s^2)}
	\end{eqnarray*}
	Inserting that value for $\alpha$ in Equation (\ref{eq:ball_center_simplified}) and solving for $p_{1,\ldots,d}$ gives
	\begin{eqnarray*}
		p_{1,\ldots,d} &=& \frac{
			c\left(\begin{array}{l}
				p_{d+1}s(r^2+2s^2)
				- r^2\sqrt{\sqnorm{p_{1,\ldots,d}}r^2+s^2}
				\\+ \sqrt{\begin{array}{l}
					4\sqnorm{p_{1,\ldots,d}}s^2(r^2+s^2)^2
					\\+ \left(
						p_{d+1}s(r^2+2s^2)
						-r^2\sqrt{\sqnorm{p_{1,\ldots,d}}r^2+s^2}
					\right)^2
				\end{array}}
			\end{array}\right)
		}{2s^2(r^2+s^2)}
	\end{eqnarray*}
	which we can solve for $p_{d+1}$ by multiplying both vectors with $p_{1,\ldots,d}$ and solving
	\begin{eqnarray*}
		p_{d+1} = \frac{
			\sqnorm{p_{1,\ldots,d}}(r^2+s^2)(s^2-\sqnorm{c})+
			\mdot{c}{p_{1,\ldots,d}}r^2\sqrt{\sqnorm{p_{1,\ldots,d}}r^2+s^2}
		}{s(r^2+2s^2)\mdot{c}{p_{1,\ldots,d}}}
	\end{eqnarray*}
	Using the definition, that $p_{1,\ldots,d} = \beta c$, we can give a closed form for $p_{d+1}$ as
	\begin{eqnarray*}
		p_{d+1} = \frac{
			\beta(r^2+s^2)(s^2-\sqnorm{c}) + r^2\sqrt{s^2+\sqnorm{c}\beta^2r^2}
		}{s(r^2+2s^2)}
	\end{eqnarray*}
	Equating this solution with $\sqrt{1-\beta^2\sqnorm{c}}$, which is derived from the unit length of $p$, we obtain the final solution for $\beta$ that is given in the Theorem.
	The solution is unique whenever it exists, since the center of a hyperball and direction vector of a cap are unique.
	The solution always exists, iff all terms are well-defined, more specifically iff the term under the root in the solution for $\beta$ is always positive.
	The term becomes smallest for $s \to 0$ and in the limit $s \to 0$, the term becomes $(c^2-r^2)^2$ which obviously is positive.
	Since we demand $s > 0$, the term is always positive and the solution for $\beta$ is always well-defined.
\end{proof}

\begin{proof}[Proof of \refcorollary{corollary:kernel_functions}]
	In the proofs we will make use of the simplified definitions $\simplified{S}$ and $\simplified{S}^{-1}$ for the special case of $v=(0,\ldots,0,1)$.
	First we will derive the equations when moving from Euclidean to spherical space and then we will derive the equations for the inverse direction.
	\allowdisplaybreaks
	\begin{description}
		\item[Product, Euclidean to spherical]
		\begin{eqnarray*}
			&&\mdot{\hat{x}}{\hat{y}}\\
			&=& \mdot{2s\frac{\veccat{x}{s}}{\sqnorm{x}+s^2}-v}{2s\frac{\veccat{y}{s}}{\sqnorm{y}+s^2}-v}\\
			&=& \mdot{2s\frac{\veccat{x}{s}}{\sqnorm{x}+s^2}}{2s\frac{\veccat{y}{s}}{\sqnorm{y}+s^2}}
			\\&&
			- \mdot{2s\frac{\veccat{x}{s}}{\sqnorm{x}+s^2}}{v}
			- \mdot{2s\frac{\veccat{y}{s}}{\sqnorm{y}+s^2}}{v}
			+ \mdot{v}{v}\\
			&=& \mdot{2s\frac{x}{\sqnorm{x}+s^2}}{2s\frac{y}{\sqnorm{y}+s^2}}
			+ \left(\frac{2s^2}{\sqnorm{x}+s^2}\right)\left(\frac{2s^2}{\sqnorm{y}+s^2}\right)
			\\&&
			- \frac{2s^2}{\sqnorm{x}+s^2}
			- \frac{2s^2}{\sqnorm{y}+s^2}
			+ 1\\
			&=& \frac{4s^2\mdot{x}{y}}{(\mdot{x}{x}+s^2)(\mdot{y}{y}+s^2)}
			+\left(\frac{2s^2}{\mdot{x}{x}+s^2} - 1\right)
			\left(\frac{2s^2}{\mdot{y}{y}+s^2} - 1\right)\\
			&=& \frac{
				4s^2\mdot{x}{y}
				+ \left(2s^2 - (\mdot{x}{x}+s^2)\right)
				\left(2s^2 - (\mdot{y}{y}+s^2)\right)
			}{(\mdot{x}{x}+s^2)(\mdot{y}{y}+s^2)}\\
			&=& \frac{
				4s^2\mdot{x}{y}
				+ 4s^4
				- 2s^2(\mdot{x}{x}+s^2)
				- 2s^2(\mdot{y}{y}+s^2)
			}{(\mdot{x}{x}+s^2)(\mdot{y}{y}+s^2)}
			+ 1\\
			&=& 1-2s^2\frac{
				\mdot{x}{x} -2\mdot{x}{y} + \mdot{y}{y}
			}{(\mdot{x}{x}+s^2)(\mdot{y}{y}+s^2)}
		\end{eqnarray*}
		\item[Distance, Euclidean to spherical]
		\begin{eqnarray*}
			&&\sqnorm{\hat{x}-\hat{y}}
			\\&=& 2 - 2 \mdot{\hat{x}}{\hat{y}}
			\\&=& 2 - 2\left(1-2s^2\frac{
				\mdot{x}{x} -2\mdot{x}{y} + \mdot{y}{y}
			}{(\mdot{x}{x}+s^2)(\mdot{y}{y}+s^2)}\right)
			\\&=& \frac{4s^2}{(\sqnorm{x}+s^2)(\sqnorm{y}+s^2)} \sqnorm{x-y}
		\end{eqnarray*}
		\item[Product, spherical to Euclidean]
		\begin{eqnarray*}
			\mdot{x}{y}
			&=& \mdot{s\frac{\hat{x}_{1,\ldots,d}}{1+\hat{x}_{d+1}}}{s\frac{\hat{y}_{1,\ldots,d}}{1+\hat{y}_{d+1}}}
			\\&=& s^2\frac{\mdot{\hat{x}}{\hat{y}} - \hat{x}_{d+1}\hat{y}_{d+1}}{(1+\hat{x}_{d+1})(1+\hat{y}_{d+1})}
		\end{eqnarray*}
		\item[Distance, spherical to Euclidean]
		\begin{eqnarray*}
			&&\sqnorm{x-y}
			\\&=& \sqnorm{x} + \sqnorm{y} - 2 \mdot{x}{y}
			\\&=& s^2\frac{\mdot{\hat{x}}{\hat{x}} - \hat{x}_{d+1}^2}{(1+\hat{x}_{d+1})^2} + s^2\frac{\mdot{\hat{y}}{\hat{y}} - \hat{y}_{d+1}^2}{(1+\hat{y}_{d+1})^2} - 2 \left(
				s^2\frac{\mdot{\hat{x}}{\hat{y}} - \hat{x}_{d+1}\hat{y}_{d+1}}{(1+\hat{x}_{d+1})(1+\hat{y}_{d+1})}
			\right)
			\\&=& s^2\frac{1 - \hat{x}_{d+1}}{1+\hat{x}_{d+1}} + s^2\frac{1 - \hat{y}_{d+1}}{1+\hat{y}_{d+1}} - 2 \left(
				s^2\frac{\mdot{\hat{x}}{\hat{y}} - \hat{x}_{d+1}\hat{y}_{d+1}}{(1+\hat{x}_{d+1})(1+\hat{y}_{d+1})}
			\right)
			\\&=& s^2\frac{\left(\begin{array}{c}
				(1-\hat{x}_{d+1})(1+\hat{y}_{d+1})
				+ (1+\hat{x}_{d+1})(1-\hat{y}_{d+1})\\
				- 2 \left(\mdot{\hat{x}}{\hat{y}} - \hat{x}_{d+1}\hat{y}_{d+1}\right)
			\end{array}\right)}{(1+\hat{x}_{d+1})(1+\hat{y}_{d+1})}
			\\&=& s^2\frac{
				2 - 2 \mdot{\hat{x}}{\hat{y}}
			}{(1+\hat{x}_{d+1})(1+\hat{y}_{d+1})}
			\\&=& \frac{s^2}{(1+\hat{x}_{d+1})(1+\hat{y}_{d+1})}\sqnorm{\hat{x}-\hat{y}}
		\end{eqnarray*}
	\end{description}
\end{proof}

\end{document}